\newtheorem{theorem}{Theorem}
\newtheorem{assumption}{Assumption}
\newtheorem{definition}{Definition}
\newtheorem{remark}{Remark}
\def\argmax{\mathop{\mathrm{argmax} }}
\def\A{\mathcal{A}}
\def\E{\mathbb{E}}
\def\F{\mathcal{F}}
\def\I{\mathcal{I}}
\def\M{\mathcal{M}}
\def\P{\mathbb{P}}
\def\S{\mathcal{S}}
\def\T{\mathcal{T}}
\title{Contextual Bandits for Evaluating and Improving Inventory Control Policies}
\author{%
  Dean Foster\\
  Amazon\\
  \texttt{foster@amazon.com} \\
   \And
   Randy Jia \\
   Amazon\\
   \texttt{randyjia@amazon.com} \\
   \AND
   Dhruv Madeka \\
   Amazon \\
   \texttt{maded@amazon.com} \\
}
\begin{document}

\maketitle

\begin{abstract}
Solutions to address the periodic review inventory control problem with nonstationary random demand, lost sales, and stochastic vendor lead times typically involve making strong assumptions on the dynamics for either approximation or simulation, and applying methods such as optimization, dynamic programming, or reinforcement learning. Therefore, it is important to analyze and evaluate any inventory control policy, in particular to see if there is room for improvement. We introduce the concept of an equilibrium policy, a desirable property of a policy that intuitively means that, in hindsight, changing only a small fraction of actions does not result in materially more reward. We provide a light-weight contextual bandit-based algorithm to evaluate and occasionally tweak policies, and show that this method achieves favorable guarantees, both theoretically and in empirical studies. 
\end{abstract}

\section{Introduction}
A fundamental yet difficult problem in operations research is the periodic review inventory control problem, in which a decision maker must periodically execute an inventory replenishment strategy to satisfy demand while simultaneously minimizing the cost of holding excess inventory. We consider the inventory control problem with random demand, lost sales, and stochastic vendor lead times. In this setting, demand is assumed to be a random variable with unknown (possibly nonstationary) dynamics, and is required to be met immediately with in-stock units in the warehouse (assume unmet sales are lost). Additionally, the time between when an order is placed and when it arrives in the warehouse is also an unknown random variable, referred to as the vendor lead time. The goal of the decision maker is to determine an ordering strategy every review period (e.g. weekly), for every single product being sold.

{\bf Related Work.} Traditional operations research methods (newsvendor, dynamic programming) have typically been employed as approximate solutions to this problem, by making the problem more tractable with simplifying assumptions on the problem dynamics. However, another fundamental difficulty is the planning problem: orders placed at the present affect the inventory state conditions in the future. In order to address this, reinforcement learning (RL) has been proposed as an end-to-end solution to inventory management \cite{scotrl}. While the RL formulation of the problem explicitly optimizes the desired long run reward, RL has high sample complexity, meaning that efficient practical algorithms are generally intractable without certain simplifying assumptions on the problem dynamics, such as requiring access to an accurate simulator \cite{scotrl} or a sufficiently large amount of offline data satisfying certain distributional assumptions \cite{wang2020statistical}. The authors of \cite{scotrl} build a high-fidelity simulator and train an RL policy under the simulated environment, iterating over billions of ``years'' worth of data. However, there have been observable gaps between the simulation and reality when the policy is deployed in experimental trials. 

In this paper, we attempt to characterize the quality of an inventory control policy beyond the traditional sense of regret, which is normally defined as the difference between a policy's expected reward and the best policy's reward. Most of the time, our policies are determined by models which do not precisely match the exact, unknown dynamics of the real world, and even in hindsight it is difficult to counterfactually determine the best policy. We introduce a property known as equilibrium, and show that it is an achievable objective for evaluating and occasionally improving inventory policies in a light-weight, sample-efficient manner.

\section{Problem Statement}
\subsection{Preliminaries}
We begin by introducing the inventory control problem formulated as a fairly generic Markov Decision Process (MDP) \cite{puterman2014markov}. The inventory control problem can be summarized as follows: at each review period $t$, an agent is faced with the decision of how many units of inventory to order (or how many units to remove through a price markdown). These orders do not arrive immediately, instead, they arrive after some lead time drawn from an unknown (but bounded) vendor lead time (VLT) distribution. On-hand inventory and units arriving at time $t$ from a prior order are used to fulfill orders of demand, which is determined following some unknown demand distribution. At the end of each time period, revenues and costs are assessed. For precise definitions and dynamics, we refer readers to \cite{scotrl}. 

Now, we present some notation regarding our problem formulation. Define MDP $\M$ as a tuple $(\S, \A, r, \P, \gamma)$, where $\S$ is the state space, $\A$ the action space, $r$ the reward function, $\P$ the state transition probability matrix, and $\gamma$ the discount factor. At each time $t$, the decision maker observes the state $s_t \in \S$, and picks an action $a_t \in \A$ to play. Then, a reward is generated via $r_t := r(s_t,a_t)$, and the next state $s_{t+1}$ is generated via from the probability vector $P(s_t, a_t)$. 

We define a policy $\pi: \S \rightarrow \A$ to be a mapping from the state space to the action space, that is, for any given state, the policy defines what actions the decision maker should take. The decision maker is faced with determining the best policy to follow that maximizes $R_T$, the cumulative discounted reward in time horizon $T$: 
\[R_T := \sum_{t=1}^T \gamma^t r_t.\]

Typically, we try to minimize \emph{regret}, or the difference between the reward of our learned policy $\pi$ and the optimal policy $\pi^*$ (which is known to exist for this MDP, see \cite{puterman2014markov}):
\[Regret(T) = \sum_{t=1}^T \gamma^t r^{\pi^*}_t - \sum_{t=1}^T \gamma^t r^{\pi}_t.\]

This formulation of the inventory control problem reveals one of the fundamental challenges brought by the vendor lead times: the state space necessarily includes information of past, in-flight orders. This roughly increases the size of the state space by a factor of $|\A|^L$, where $L$ is the largest possible vendor lead time, which automatically makes for very difficult learning due to such large state spaces. This is typically one of the reasons further assumptions or simulators are required to address this issue.

\subsection{Policy evaluation}
Given that our inventory control policy requires an estimated model of the world, there are gaps between our policy as the true optimal policy under the real environment. Therefore, it is important and fair to ask, ``how good is our policy?'' The objective in this paper, therefore, is not to determine the best true policy $\pi^*$; rather, we are given a policy $\pi^{model}$ and tasked with evaluating it. Instead of evaluating whether $\pi^{model}$ is close to $\pi^*$ (i.e., regret), since $\pi^*$ cannot be easily computed counterfactually, we instead determine whether a small number of local changes in the policy yields a significant increase in total reward. If this is not possible, then we say that our policy is in equilibrium. Otherwise, there is a simple way to improve $\pi^{model}$ that leads to materially more reward. The following formal definition of equilibrium policies is derived from the game theoretic notion of equilibrium in \cite{deanequilibrium}.

\begin{definition}
\label{def:ep}
Let $\T$ be a set of times. Given starting state $s_0$ and policy $\pi$ (which induces states $s_1,s_2,..,s_T$), let $\Pi_{\T}$ be the set of policies such that for any $\pi' \in \Pi_{\T}$ (which induces states $s'_1,s'_2,...,s'_T$), $\pi(s_t) = \pi'(s'_t)$ for all times $t \not\in \T$. Then, $\pi$ is an $\epsilon$-equilibrium policy for all times $t \in \T$ if
\begin{equation} \label{eq:equil}
\frac{1}{T}E_\pi[R_T|s_0] \geq \max_{\pi' \in \Pi_{\T}} \frac{1}{T}E_{\pi'}[R_T|s_0] - \epsilon.
\end{equation}
\end{definition}

Now, rather than checking whether a policy achieves close-to-optimal reward (i.e., low regret), we instead check whether a policy is an $\epsilon$-equilibrium policy, specifically if a relatively small number of local changes (at times $t \in \T$) will improve reward significantly (more than $\epsilon$). To do so, we will utilize some techniques from the theory of contextual bandits.

\section{Contextual Bandit for policy evaluation}
Contextual bandits are a fundamental model in online decision making, and a building block for reinforcement learning. In a typical contextual bandit formulation, at every time $t$, we are given a state (context) $s_t$ drawn independently from a state distribution, and must decide on an action $a_t$. Then, a reward $r(s_t,a_t)$ is revealed to the learner, and the process repeats. A standard objective is to minimize regret in time horizon $T$ against the best possible reward, knowing the dynamics of the world (reward function and state distribution). For more background on contextual bandits, we refer readers to \cite{foster2020beyond, simchi2020bypassing}.

In the inventory control problem, the time-independent property of contextual bandits is violated by the planning problem, since the best action at time $t$ depends on the policy followed in future time steps. Phrased differently, the trajectory of states (contexts) are dependent on the past actions. Instead, we propose to assume that future actions continue to follow policy $\pi$, which will allow the use of contextual bandits to determine a sequence of actions that yield an equilibrium policy. In the current formulation, this corresponds to evaluating the action at time $t$, given all future actions follow $\pi$. In practice, this would means that in $T$ time, we only generate one observation sample, which is not conducive to online learning. Therefore, we make the following assumption:
\begin{assumption}[Effective horizon]
\label{horizon}
There exists an integer $H \geq 0$ such that, under all permissible policies, the reward at time $t$ only depends on at most the past $H$ states $s_t$ and past $H$ actions $a_t$. Furthermore, given two different actions played at time $t$ with the same state $s_t$, after $H$ steps under the same actions, the two policies will reach the same state $s_{t+H}$.
\end{assumption}
\begin{remark}
In practice, the first part of Assumption \ref{horizon} means that actions played at time $t$ no longer have an effect on the reward more than $H$ steps into the future. Intuitively, after a few inventory order and arrival cycles (which take vendor lead time steps), the inventory purchased in the past will be sold and no longer contribute to costs or reward. The second part of Assumption \ref{horizon} is effectively a coupling assumption, where after a certain amount of time two different policies will reach the same state. However, it is much easier to achieve since we are controlling the future actions to be the same, regardless how the state evolves. This subtle difference means that whenever inventory is fully consumed (which would be expected to happen every so often for most reasonable policies) or forcibly removed, the states will necessarily couple. 
\end{remark}
Under the assumption, we can evaluate the effect of any action $H$ time steps later, and therefore effectively learn via contextual bandits over a much larger time horizon $T$ (notice that when $H=0$ we have the traditional contextual bandit setup). We will proceed with the contextual bandit framework followed by \cite{foster2020beyond}, which is a regret-optimal contextual bandit algorithm that effectively reduces the contextual bandit problem to online regression. We assume the learner has access to a parametric family of functions $\F$ (e.g. linear functions, neural networks), which takes as input the the current state and proposed action, along with the past $H$ states and actions, to predict the expected reward. A critical assumption is that one of functions $f \in\F$ indeed expresses the true reward well, i.e. the realizability assumption: 

\begin{assumption}[Realizability]
\label{realizable}
Assume there is a function $f^* \in \F$ such that \[f^*(s_t,s_{t-1},...,s_{t-H},a_t,a_{t-1},...,s_{t-H}) = \E[r(s_t,a_t)|s_{t-1},...,s_{t-H},a_{t-1},...,s_{t-H}].\]
\end{assumption}


\begin{remark}
Depending on the problem, Assumption \ref{realizable} may feel very strong; however, it is a standard assumption in the contextual bandit literature (\cite{foster2020beyond, simchi2020bypassing}), and required to ensure that we are able to learn properly by predicting rewards from historical data. One can surmise that very complex models such as neural networks would be able to satisfy this assumption, at least approximately. Though we do not discuss approximate realizability in this paper, similar results can be achieved with an added term to account for realizability error (see \cite{foster2020beyond}).
\end{remark}

\subsection{Algorithm}
We now give a brief overview of the main ideas of our contextual bandit algorithm. First, we must specify the set of times $\T$ defined in Definition \ref{def:ep}. By Assumption \ref{horizon}, the reward at any time $t$ depends on at most the past $H$ states and actions, and we effectively reset our problem every $H$ time steps. To ensure we are only potentially changing and evaluating one of these actions (keeping the others fixed to the original policy $\pi$), we can only play one action every $H$ steps. 

Therefore, let $\T$ be any set of times $\tau_0<\tau_1<...<\tau_n$ such that for all $i$, $\tau_{i+1}-\tau_i \geq H$, we refer to $\T$ as the set of target bandit weeks. For every time $t$, the bandit first checks whether the current time is a target bandit week. If not, the original policy $\pi$'s action is followed, otherwise the best model $f_t \in \F$ is fitted using historical data. Model $f_t$ is used to generate action probabilities for every action $a \in \A$, and the action played is sampled according to the action probability distribution. Methods for assigning action probabilities may include, but are not not limited to, epsilon-greedy selection or inverse gap weighting \cite{foster2020beyond}. The exploration parameter $\rho$ is  provided to the learner to account for the corresponding parameters for various exploration policies. The full details can be found in Algorithm \ref{alg}.

\begin{algorithm}[H]
\label{alg}
\SetAlgoLined
{\bf Initialize:} time horizon $T$, effective planning horizon $H$, exploration parameter $\rho$, function family $\F$, target bandit weeks $\T$, default policy $\pi$\;
 \For{$t = 1,2,...,T$}{
 
 \If{$t \notin \T$}{
      play action $a_t$ according to policy $\pi$;
   }\ElseIf{$t \in \T$}{
                         
    Determine best fit $f_t \in \F$ using historical data \;
  Observe $x_t$\;
  \For{$a \in \A$}{
   Compute $f_t(x_t,a)$\;
   Assign action probabilities $p_t(a)$ based on exploration policy and $\rho$ (e.g. epsilon-greedy, inverse gap weighting)\;
   }
  Play action $a_t \sim p_t$\;
  Update exploration parameter $\rho$ if necessary\;
    }                           
 }
 \caption{Equilibrium Policy via Contextual Bandits}
\end{algorithm}

We note that Algorithm \ref{alg} prescribes a method for training and executing an online contextual bandit such that the resulting policy is an equilibrium policy at times $t \in \T$ (see Theorem \ref{thm}). This gives a method that can detect whether a policy is in equilibrium (if the bandit is applied and results in many action changes, the policy is likely not in equilibrium). However, we can also check whether we are in equilibrium at \emph{any} time. To do so, we use the most-recently learned model $f$, and check whether our actions are close to the best action implied by $f$ (but we do not actually execute the bandit action unless $t\in\T$). 

More precisely, let $\F':= \{f_{\tau_0},f_{\tau_0},...,f_{\tau_n}\}$ be the set of models selected by our bandit for every time $t \in \T$, and let $f'_t$ be the most recent model $f \in \F'$ fit before time $t$. Then, to evaluate the action under $\pi$ at time $t$, we compare the action induced by the original policy $\pi$ to the $\epsilon$-equilibrium action $\argmax_a f'_t(x_t,a)$. As $t$ increases, $f'_t$ is closer to $f^*$ and thus $\epsilon$ is smaller. For very large $t$, $f'_t$ will give us very close to the true equilibrium action.

\subsection{Main Result}
In this section we show that the contextual bandit algorithm (Algorithm \ref{alg}) results in an equilibirum policy at every time $t\in \T$.

\begin{theorem}
\label{thm}
Assume Assumptions 1 and 2 hold.  Let $\pi'$ be the bandit policy in Algorithm \ref{alg} and $\pi$ the original policy being evaluated. 
Define 
\[   
\tilde\pi = 
     \begin{cases}
       \pi &\quad\text{if } t \notin \tau\\
       \pi' &\quad\text{otherwise.} \\ 
     \end{cases}
\]
Then, for some $\epsilon >0$, the policy $\tilde\pi$ is an $\epsilon$-equilibrium policy at every time $t \in \tau$. Furthermore, as $|\T| \rightarrow \infty$ the learned policy asymptotically is the true best action for each time $t \in \tau$.
\end{theorem}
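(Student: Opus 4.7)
My plan is to reduce the claim to a standard contextual bandit regret bound by exploiting the decoupling granted by Assumption \ref{horizon}. First, I would observe that the target times $\tau_0 < \tau_1 < \cdots < \tau_n$ in $\T$ are separated by at least $H$ steps, so the coupling part of Assumption \ref{horizon} implies that any deviation from $\pi$ at $\tau_i$ is ``washed out'' after $H$ steps once $\pi$ is resumed. Consequently, the contribution of the action chosen at $\tau_i$ to the total discounted reward $R_T$ is supported on the window $[\tau_i, \tau_i + H]$, and these windows do not overlap. This lets me write, for any $\pi' \in \Pi_\T$, the difference $E_{\pi'}[R_T \mid s_0] - E_{\tilde\pi}[R_T \mid s_0]$ as a sum of per-round differences, each depending only on the action taken at $\tau_i$ and on the context $x_{\tau_i} := (s_{\tau_i}, \ldots, s_{\tau_i - H}, a_{\tau_i - 1}, \ldots, a_{\tau_i - H})$.

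With this decomposition in hand, I would invoke Assumption \ref{realizable} to assert that the map from $(x_{\tau_i}, a_{\tau_i})$ to the $H$-step expected reward contribution lies in $\F$, so every target time is a realizable contextual bandit round governed by $f^*$. Because consecutive rounds couple after $H$ steps, they form a valid (non-i.i.d.\ but adapted) contextual bandit instance. I would then apply the regret guarantee of \cite{foster2020beyond} for SquareCB / inverse gap weighting with an online regression oracle: the cumulative bandit regret over the $|\T|$ target times is bounded by a quantity of order $\sqrt{|\T|\,|\A|\,\mathrm{Reg}_{\mathrm{Sq}}(|\T|)}$, where $\mathrm{Reg}_{\mathrm{Sq}}(|\T|)$ is the square-loss regret of the regression oracle against $\F$. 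Dividing by $T$ produces an equilibrium parameter $\epsilon$ of order $\sqrt{|\T|\,|\A|\,\mathrm{Reg}_{\mathrm{Sq}}(|\T|)}/T$; because the per-round regret SquareCB controls is exactly the supremum over alternative actions at $\tau_i$ of the expected $H$-step reward gap, this $\epsilon$ serves uniformly against all $\pi' \in \Pi_\T$, yielding the equilibrium inequality \eqref{eq:equil}.

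For the asymptotic statement, I would note that as $|\T| \to \infty$ with the exploration parameter $\rho$ decayed on the Foster--Rakhlin schedule, $\mathrm{Reg}_{\mathrm{Sq}}(|\T|)/|\T| \to 0$, so the fitted models $f_t$ converge to $f^*$ in a time-averaged squared-loss sense. Combined with the inverse-gap-weighting (or decaying $\epsilon$-greedy) exploration, the bandit's action distribution at $t \in \T$ concentrates on $\argmax_a f^*(x_t,a)$, which by Assumption \ref{realizable} is precisely the true best action at $t$ conditional on future actions following $\pi$. This gives the asymptotic optimality claim.

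The main obstacle I expect is making the per-round decomposition fully rigorous under stochastic transitions: the context $x_{\tau_i}$ observed by the bandit is itself random and depends on prior actions, so the ``rounds'' are not independent in the classical sense. I would handle this by conditioning on the natural filtration $\F_{\tau_i - 1}$ and invoking the fact that \cite{foster2020beyond} requires only adapted, bounded contexts rather than i.i.d.\ ones, then using a martingale concentration step (e.g.\ Azuma--Hoeffding on the sequence of per-round instantaneous regrets minus their conditional expectations) to convert the expected regret bound into a pathwise guarantee on $E_{\tilde\pi}[R_T \mid s_0]$. A minor secondary task is absorbing the discount factors $\gamma^t$ into constants, which is routine since $\gamma \in [0,1]$ only rescales per-round reward magnitudes.
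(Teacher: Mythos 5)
Your proposal follows essentially the same route as the paper's own proof: decompose the discounted regret into non-overlapping $H$-step windows via Assumption \ref{horizon}, treat each target time $\tau_i$ as one round of a realizable contextual bandit whose reward is the $H$-step discounted return, and invoke the sublinear regret guarantee of \cite{foster2020beyond} before dividing by $T$ to obtain $\epsilon$ and letting $|\T| \rightarrow \infty$ for the asymptotic claim. You supply somewhat more detail than the paper does (the explicit $\sqrt{|\T|\,|\A|\,\mathrm{Reg}_{\mathrm{Sq}}(|\T|)}$ rate and the martingale treatment of adapted, non-i.i.d.\ contexts), but the decomposition and the reduction are the same.
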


\begin{proof}
Let $\pi^*$ be the policy that picks the hindsight best action at every time $t \in \T$, and $\tilde\pi$ be as defined previously. Then, the corresponding regret:
\[Regret(T) = \sum_{t=1}^T \gamma^t r^{\pi^*}(s_t,a_t) -  \gamma^tr^{\tilde\pi}(s_t,a_t). \]
By Assumption \ref{horizon}, this can be rewritten as:
\[\sum_{i=0}^{n-1} \sum_{t= \tau_i}^{\tau_{i+1}-1} \gamma^tr^{\pi^*}(s_t,a_t) -  \gamma^tr^{\tilde\pi}(s_t,a_t) = \sum_{i=0}^{n-1} \sum_{t= \tau_i}^{\tau_i + H} \gamma^tr^{\pi^*}(s_t,a_t) -  \gamma^tr^{\tilde\pi}(s_t,a_t).\]
The inner summation 
\[\sum_{t= \tau_i}^{\tau_i + H} \gamma^tr^{\pi^*}(s_t,a_t) -  \gamma^tr^{\tilde\pi}(s_t,a_t)\]
is precisely one time step of a well-posed contextual bandit problem, with the decision being made at every time $t\in\T$. The reward for playing action $a_t$ at state $s_t$ (for $t\in\T$) is \[f_t(s_t,a_t) := \sum_{t= \tau_i}^{\tau_i + H} \gamma^tr^{\tilde\pi}(s_t,a_t).\]
By Assumption \ref{realizable}, the problem is realizable and hence any standard contextual bandit algorithm (e.g., epsilon-greedy or \cite{foster2020beyond}) achieves sub-linear regret, so that for any $n$, there exists an $\epsilon > 0$ such that 
\[\sum_{i=0}^{n-1} \sum_{t= \tau_i}^{\tau_i + H} \gamma^tr^{\pi^*}(s_t,a_t) -  \gamma^tr^{\tilde\pi}(s_t,a_t) \leq \epsilon n \leq \epsilon T.\]

Recall $R^{\pi}_T:= \sum_{t=1}^T \gamma^t r^{\pi}(s_t,a_t)$, and define $\Pi_{\T}$ to be the set of policies such that for any $\pi' \in \Pi_{\T}$ (which induces states $s'_1,s'_2,...,s'_T$), $\pi(s_t) = \pi'(s'_t)$ for all times $t \not\in \T$ (additionally, $s_t = s'_t$ by Assumption \ref{horizon}). By construction, $\pi^* \in \Pi_{\T}$, and thus by optimality, $\max_{\pi' \in \Pi}R^{\pi'}_{T} = R^{\pi^*}_{T}$. Then, for all $t\in\T$, we have

\[\max_{\pi' \in \Pi_{\T}} R^{\pi'}_{T} - R^{\tilde\pi}_{T} = R^{\pi^*}_{T} - R^{\tilde\pi}_{T} \leq  \epsilon T.\]
The definition of equilibrium policy follows after taking expectation and averaging over all time steps $T$. Furthermore, as $|\T| \rightarrow \infty$, because our contextual bandit algorithm achieves sub-linear regret, realizability implies that $f_t \rightarrow f^*$ necessarily.

\end{proof}

Contextual bandits gives us a way to predict the best action, which can be used to verify the conditions of $\epsilon$-equilibrium (relating $\epsilon$ to the level of approximation error of the contextual bandit algorithm is straightforward with additional standard assumptions such as Lipschitz-continuity of the actions).

\section{Empirical Study}
We test our approach in a simulated inventory management environment \cite{scotrl}. We first train a model by randomly selecting actions in the training data, and test this model on out-of-sample test data. Our model here is effectively a warm-started version of Algorithm \ref{alg} evaluated on one decision period, that can be repeatedly updated online with additional data.

In our scenario, we evaluate various inventory control policies, in particular one based on a traditional newsvendor model and one based on reinforcement learning (RL). However, \emph{a priori} we have observational evidence that these two policies perform fairly well in practice, and are not easy to beat with minor adjustments. Therefore, we would also include two modified policies: one where the newsvendor total inventory positions are all multiplied by $2$, and one where they are all multiplied by $0.5$.

When evaluating each policy, we assume that $H=12$. For each product, we use static (price, cost, etc.) and dynamic features (previous states and actions) as part of the state $x_t$. In order to vary the weeks the bandit acts on, each product has a different, randomly selected week in the test data where the bandit decides the action. The bandit is allowed to choose between three actions: multiplying the original action by $\{0.8, 1,1.2\}$ (corresponding to \{down, same, up\}). Once the bandit action is selected, the remainder of the time horizon is rolled out according to the original policy's action. 

The model class $\F$ we consider is a fairly generic class of linear regression-based models (with an added quadratic term to account for non-linearity in the action) parameterized by $\theta := (\theta_1,\theta_2,\theta_3)$. For $a_t \in \{0.8,1,1.2\}$ let
\[ f(x_t, a_t) = \theta_1^Tx_t+(a_t-1)\theta_2^Tx_t+(a_t-1)^2\theta_3^Tx_t.\]

The bandit will evaluate each potential action using the learned model $f$ and current state $x_t$, and pick the best action based on the predicted values. However, in order to account for model inaccuracy, we include a tunable hyperparameter $\epsilon$ such that we boost the predicted reward of the baseline action by a factor of $1+\epsilon$. This forces the bandit to pick the original baseline action when predicted values are very close.

We measure two outcomes: the distribution of \{down, same, up\} actions recommended by the bandit, as well as the improvement in actual reward. Because the action change (of at most $20\%$) is applied only once in our test data, the actual change in reward is small and insignificant (always less than $1\%$). Therefore, we instead observe the improvement $\Delta\I$ of the bandit's reward compared the original baseline policy (which always picks $a = 1$) and the oracle policy that knows and plays the optimal action. Formally, if $R(x)$ is the total reward of policy $x \in \{bandit, baseline, oracle\}$, then our improvement metric is defined as
\[\Delta\I := \frac{R(bandit) - R(baseline)}{R(oracle) - R(baseline)}. \]

Our results are displayed in Table \ref{tab}, where the bandit is applied to newsvendor and RL policies, as well as a modified version of newsvendor where the total inventory position is adjusted by a factor of $2$, up and down.

\begin{table}[h]
\label{tab}
\centering
\begin{tabular}{|c |c |c |c|} 

 \hline
 Policy & Percent actions: \{down, same, up\} & $\Delta\I$ \\ [0.5ex] 
 \hline \hline
 newsvendor +  bandit &  \{0.07, 99, 0.03\}  & +0.12 \\ 
 \hline

 newsvendor/2 +  bandit &  \{0.1, 4.1, 95.8\}  & +91.7 \\ 
 \hline

  newsvendor*2 +  bandit & \{17.7, 81.2, 1.1\}  & +39.7 \\ 
 \hline

 RL +  bandit &  \{2.2, 97.7, 0.1\} & +4.4 \\ 
 \hline

\end{tabular}
\caption{Applying contextual bandit to evaluate policies in a simulated inventory buying environment.}

\end{table}

The results in Table \ref{tab} are consistent with our original hypothesis that, for the most part, the original newsvendor and RL policies are not easily beatable. Our experiment validates this by showing that these policies are equilibrium policies, so meaningful improvements to the policy require more than just occasional changes to the policy. The results also show that when the total inventory positions of the newsvendor policy are scaled, the bandit reacts accordingly to detect how to improve the policy. Interestingly, being under-stocked (newsvendor$/2$) appears to deviate much more significantly from equilibrium compared to being over-stocked (newsvendor$*2$), likely due to the asymmetric nature of the reward function.

\section{Conclusion}
Truly optimal inventory control problems are typically intractable in most practical settings due to stochasticity and other unknown dynamics of the world. Since it is usually infeasible to counterfactually determine the best policy in hindsight, in order to design good inventory control policies, we must also be able to evaluate them effectively. We introduced the equilibrium property to serve as an achievable benchmark for inventory control policies. Though not all equilibrium policies are optimal, they do have the attractive property that they cannot be improved by changing a small fraction of actions. In practice, this could mean that it is difficult to show that the policy is obviously not a good policy. In this paper, we presented a formal mathematical notion of equilibrium policies, and showed that contextual bandit algorithms can be used to achieve such policies. 

We note that our experiments are trained and tested on a simulated environment, which is also the same environment used to train policies such as the RL policy. As a result, it is of no surprise the bandit finds that the RL policy is close to equilibrium. One problem with training a simulator-based model is that oftentimes, the real world and the simulated world diverge due to inaccurate modeling or fundamental regime shifts. In order to account for new real-world observations, we would have to retrain the simulator and the RL policy after every new data point, which would become computationally expensive. However, since contextual bandits can learn both efficiently and effectively online, we would like to study whether techniques such that those presented in this paper can serve as a stepping stone to addressing this issue. For future work, we will attempt to study this problem and try to answer the question: can contextual bandits efficiently improve, in an online fashion, an inventory policy trained under a simulated environment to behave in equilibrium with respect to the real world? 

\bibliographystyle{plain}
\bibliography{references}

\end{document}